\newenvironment{proof}{\noindent{\sc Proof.}}{\qed}
\newtheorem{theorem}{Theorem}[section]
\newcommand{\qed}{$\blacksquare$}
\def\RR{{\mathbb R}}
\def\SS{{\mathbb S}}
\def\x{\mathbf{x}}
\def\y{\mathbf{y}}
\def\u{\mathbf{u}}
\def\w{\mathbf{w}}
\def\v{\mathbf{v}}
\def\O{{\cal O}}
\def\C{{\mathcal C}}
\def\WW{{\mathbb W}}
\def\be{\begin{equation}}
\def\ee{\end{equation}}
\def\bea{\begin{eqnarray}}
\def\eea{\end{eqnarray}}
\def\eref#1{(\ref{#1})}
\def\donchitre#1#2{\vskip 6.5cm\noindent
\parbox[t]{1in}{\special{eps:#1.eps x=6.5cm y=5.5cm}}
\hbox to 7cm{}\parbox[t]{0.0cm}{\special{eps:#2.eps x=6.5cm y=5.5cm}}}
\def\XX{{\mathbb X}}
\title{Function approximation by deep networks}
\author{
 H.~N.~Mhaskar\thanks{
Institute of Mathematical Sciences, Claremont Graduate University, Claremont, CA 91711. The research of this author is supported in part by the Office of the Director of National Intelligence (ODNI), Intelligence Advanced Research Projects Activity (IARPA), via 2018-18032000002.
\textsf{email:} hrushikesh.mhaskar@cgu.edu} 
 \ and   T. Poggio \thanks{
   Center for Brains, Minds, and Machines, McGovern Institute for Brain Research,
   Massachusetts Institute of Technology, Cambridge, MA, 02139. The research of this author is supported by the Center for Brains, Minds and
  Machines (CBMM), funded by NSF STC award CCF-1231216.
 \textsf{tp@mit.edu} }
 }
 \date{}
\begin{document}

\maketitle
\begin{abstract}
  We show that deep networks are better than shallow networks at
  approximating functions that can be expressed as a  composition
  of functions described by a directed acyclic graph, because the deep
  networks can be designed to have the same
  compositional structure, while a shallow network cannot exploit this
  knowledge.  Thus, the blessing of compositionality
  mitigates the curse of dimensionality.  
  On the other hand, a theorem called
  good propagation of errors allows  to ``lift'' theorems about
  shallow networks to those about deep networks with an appropriate
  choice of norms, smoothness, etc.  We illustrate this in three
  contexts where each channel in the deep network calculates a
  spherical polynomial, a non-smooth ReLU network, or another zonal function
  network related closely with the ReLU network.
 
\end{abstract}
\section{Introduction}\label{intsect}
As is well known, deep networks are playing an increasingly important role in artificial intelligence, industry, and many aspects of modern life ranging from homeland security to automated cars. 
A topic of great recent interest is to examine the expressive power of deep networks to explain their remarkable success in comparison with classical shallow networks.
There are many efforts in this direction, depending upon what one defines to be the expressive power \cite{MontufarBengio2014, serra2017bounding, sharir2017expressive,telgarsky2016benefits, eldan2016power, dingxuanpap}. 

The fundamental problem of machine learning is the following. Given an integer $q\ge 1$, and data of the form $\{(\x_i,y_i)\}_{i=1}^M\subset \RR^q\times \RR$, drawn randomly from a probability distribution $\mu$,  find a model $P$ such that $P(\x_i)\approx y_i$. 
In theory, one assumes an underlying function $f$ on the unknown support of the distribution $\mu^*$ from which the $\x_i$'s are sampled, so that $y_i=f(\x_i)+\epsilon_i$, $i=1,\cdots, M$, and $\epsilon_i$ are zero mean random variables.
Equivalently, $f(\x)=\mathbb{E}_\mu(y|\x)$. 
An important aspect of the  problem of machine learning is thus viewed
as a problem of function approximation. 
A goal of this paper is to standardize the notion of expressive power in term of the ability of the network to approximate functions measured in a manner utilized in approximation theory for more than 100 years. 
Our main thesis is that the ability of deep networks to do a better approximation than shallow networks stems from their ability to mimic any compositional structure inherent in the target function; an ability that shallow networks cannot have.
On the other hand, a theorem called ``good propagation of errors'' allows us to lift results from shallow networks to those for deep networks, highlighting the importance of compositionality. 
It will be pointed out that there is no natural way to define a probability measure that can take advantage of the very important compositionality with respect to which one can define generalization error as in classical machine learning. 
In particular, the bias-variance split does not hold, and a new theory is required.
This paper summarizes some of our recent results in this direction, in particular, for deep non-smooth ReLU networks.

We will describe the central problems of approximation theory in Section~\ref{atsect} and illustrate them using the example of approximation of a function on the Euclidean (hyper-)sphere by spherical polynomials.
In Section~\ref{deepsect}, we will establish the terminology for describing deep networks. A theorem called good propagation of errors is proved and discussed in Section~\ref{propsect}. 
Applications to approximation by non-smooth ReLU networks and networks with another related activation function are discussed in Section~\ref{atsect}. 
The relationship of our results with some others in the literature is discussed in Section~\ref{relatedsect}.

\section{Basic ideas in approximation theory}\label{atsect}
A central problem in approximation theory is to investigate the quality of approximation of an unknown function given finite amount of information about the function. 
In order to do so, one assumes that the target function $f$ is in some Banach space $\XX$ with norm $\|\cdot\|$. 
The function needs to be approximated by models coming from a nested sequence of sets $V_0\subset\cdots\subset V_n\subset V_{n+1}\subset \cdots$ so that $\displaystyle\cup_{n=0}^\infty V_n$ is dense in $\XX$. 
One of the most important quantities in approximation theory is the \textit{degree of approximation}, defined by
\begin{equation}
\label{degapprox}
\mathsf{dist}(\XX;f,V_n)=\inf_{P\in V_n}\|f-P\|.
\end{equation}
The assumption that $\displaystyle\cup_{n=0}^\infty V_n$ is dense in $\XX$ means that $\displaystyle\lim_{n\to\infty}d_n(\XX;f,V_n)=0$. 
The rate of this convergence clearly depends upon further assumptions  on $f$, called \textit{prior} in machine learning parlance, and    \textit{smoothness class} in approximation theory. 
Typically, this class is defined in terms of a \textit{smoothness parameter} $\gamma$ as a subspace $\WW_\gamma$ of $\XX$.

Constructing the minimizer in (\ref{degapprox}) is generally not of any interest. 
Such a minimizer can be hard to obtain computationally, and does not have many desirable properties; e.g., it is generally not sensitive to the local properties of $f$.
Instead, the central themes of approximation theory are:
\begin{description}

\item[\textbf{Direct theorem}]  This states that if $f\in \WW_\gamma$,  
\begin{equation}\label{genericdirecttheo}
\mathsf{dist}(\XX;f,V_n)=\O(n^{-s})
\end{equation}  
for some $s$ depending upon $\gamma$ and other parameters, e.g., the number of input variables to $f$. 
\item[\textbf{Construction, aka training}] Give a method to construct $P\in V_n$ from the given information on $f$ such that $\|f-P\|=\O(n^{-s})$, and study the connection between the amount of information available and $n$ for which such a construction is possible.
 \item[\textbf{Width theorem}]This states that if we can only assume that $f\in K \subset \WW_\gamma$ for a compact subset $K$, and $n$ pieces of information are allowed on $f$ (in the form of a continuous mapping $S:K\to\RR^n$), then no matter how one constructs an approximation to $f$ from this information, i.e., $A(S(f))\in\XX$, the worst case error under the assumption that $f\in K$ is $\Omega(n^{-s})$. 
This asserts merely the existence of $f\in K$ for which the lower estimate holds.
\item[\textbf{Converse theorem}]  This states that the estimate (\ref{genericdirecttheo}) implies that $f\in \WW_\gamma$. 
\emph{This is a statement about individual functions, not about the whole class of functions.} 
Also, while the width estimate involves only continuous parameter selection, a converse theorem does not stipulate this.
\end{description}

We discuss an example in connection with approximation on a Euclidean sphere of $\RR^{q+1}$ for some integer $q\ge 1$: 
$$
\SS^q=\{\x=(x_1,\cdots, x_{q+1})\in\RR^{q+1}: x_1^2+\cdots+x_{q+1}^2=1\}.
$$
We will be interested in approximating continuous functions on $\SS^q$, so that the Banach space is $C(\SS^q)$ equipped with the uniform norm $\|\cdot\|_{\SS^q}$. 
The restriction of an algebraic polynomial in $q+1$ real variables of total degree $n$ to $\SS^q$ is called a \textit{spherical polynomial} of degree $n$. 
The space of all spherical polynomials of degree $<n$ is denoted by $\Pi_n^q$. 
Thus, $V_n=\Pi_n^q$. 
We will denote $\mathsf{dist}(C(\SS^q);f,\Pi_n^q)$ by $E_{q;n}(f)$.

The smoothness class is defined as follows.
If $\Delta$ is the negative Laplace-Beltrami operator on $\SS^q$,  
a $K$-functional on the space $C(\SS^q)$  is defined by
\begin{equation}
\label{kfuncdef}
K_r(f,\delta)=\inf\{\|f-g\|_{\SS^q}+
\delta^r\|(I+\Delta)^{r/2}g\|_{\SS^q}\}, \qquad \delta>0,
\end{equation}
where $r$ is an even integer, and the infimum is taken over all $g$ for which  $(I+\Delta)^{r/2}g\in C(\SS^q)$. 
The class $W_{q;\gamma}$ is defined by
\begin{equation}
\label{smoothness_class_def}
W_{q;\gamma}=\left\{ f\in C(\SS^q) : \|f\|_{W_{q;\gamma}}=\|f\|_{\SS^q}+\sup_{\delta\in (0,1)}\delta^{-\gamma}K_r(f,\delta) <\infty\right\}
\end{equation}
for an even integer $r>2\gamma$. The following estimate (\ref{equivtheorem}) that the class $W_{q;\gamma}$ (although not the norm $\|f\|_{W_{q;\gamma}}$) is independent of the choice of $r$.

It is proved in \cite{pawelke1972, lizorkin1994nikol} that there exist positive constants $c_1,c_2$ depending only on $q, \gamma, r$ such that
\begin{equation}
\label{equivtheorem}
c_1\|f\|_{W_{q;\gamma}}\le \|f\|_{\SS^q}+\sup_{n\ge 1}n^{\gamma} E_{q;n}(f)\le c_2\|f\|_{W_{q;\gamma}}.
\end{equation}
The second inequality gives an estimate on the degree of approximation in terms of the smoothness class, and represents the direct theorem. 
The first inequality asserts that the rate at which the degree of approximation converges to $0$ determines the smoothness class to which the target function belongs; i.e., a converse theorem.
The converse theorem in particular is stronger than the width theorem.

A construction of a polynomial approximation that yields the bounds is given in \cite{lizorkin1994nikol} in the case when spectral information is available, and in \cite{quadconst} in the case when noisy values of the function are given at arbitrary points on the sphere.

We note that the dimension of $\Pi_n^q\sim n^q$. 
Therefore, in terms of the number of parameters $M$ involved in the approximation, the rate in (\ref{equivtheorem}) is $\sim M^{-\gamma/q}$. This exponential dependence on $q$ is called \textit{curse of dimensionality}; the quantity $q/\gamma$ is called the \textit{effective dimension} of $W_{q;\gamma}$.

\section{Deep networks and compositional functions}\label{deepsect}
A commonly used definition of a deep network is the following. 
Let $\phi :\RR\to\RR$ be an activation function; applied to a vector $\x=(x_1,\cdots,x_q)$, $\phi(\x)=(\phi(x_1),\cdots,\phi(x_q))$. Let $L\ge 2$ be an integer, for $\ell=0,\cdots,L$, let $q_\ell\ge 1$  be an integer ($q_0=q$),  $T_\ell :\RR^{q_\ell}\to \RR^{q_{\ell+1}}$ be an affine transform, where $q_{L+1}=1$.  A deep network with $L-1$ hidden layers is defined as the compositional function
$$
\x\mapsto T_L(\phi(T_{L-1}(\phi(T_{L-2}\cdots\phi(T_0(\x))\cdots).
$$
There are several shortcomings for this definition. 
First, a function may have more than one compositional representation, so that the affine transforms and $L$ are not determined uniquely by the function itself. 
Second, this notion does not capture the connection between the nature of the target function and its approximation.
Third, the affine transforms $T_\ell$ define a special directed acyclic graph (DAG). 
It is difficult to describe notions of weight sharing, convolutions, sparsity, etc. in terms of these transforms.

Therefore, we follow \cite{dingxuanpap} and fix a DAG to represent both the target function and its approximation. 
Let $\mathcal{G}$ be a DAG, with the set of nodes $V\cup \mathbf{S}$, where $\mathbf{S}$ is the set of source nodes, and $V$ that of non-source nodes. 
We assume that there is only one sink node, $v^*$.
A \textit{$\mathcal{G}$-function} is defined as follows. 
The in-edges to each node in $V$ represents an input real variable. 
For each node $v\in V\cup \mathbf{S}$, we denote its in-degree by $d(v)$. A node $v\in V\cup\mathbf{S}$ itself represents the evaluation of a real valued function $f_v$ of the $d(v)$ inputs. The out-edges fan out the result of this evaluation. 
Each of the source node obtains an input from some Euclidean space. 
Other nodes can also obtain such an input, but by introducing dummy nodes, it is convenient to assume that only the source nodes obtain an input from the Euclidean space. 
In summary, a $\mathcal{G}$-function is actually a set of functions $\{f_v : v\in V\cup\mathbf{S}\}$, each of which will be called a \textit{constituent function}.

For example, the DAG $\mathcal{G}$ in Figure~\ref{graphpict} (\cite{dingxuanpap}) represents the compositional function
\begin{eqnarray}
\label{gfuncexample}
f^*(x_1,\cdots, x_9)&=&
h_{19}(h_{17}(h_{13}(h_{10}(x_1,x_2,x_3, h_{16}(h_{12}(x_6,x_7,x_8,x_9))), h_{11}(x_4,x_5)), \nonumber\\ 
&& \qquad\qquad h_{14}(h_{10},h_{11}), h_{16}), h_{18}(h_{15}(h_{11},h_{12}),h_{16})).
\end{eqnarray}
The $\mathcal{G}$-function is $\{h_{10},\cdots,h_{19}= f^*\}$; the source nodes $\mathbf{S}=\{ h_{10}, h_{11}, h_{12}\}$, $V=\{h_{13},\cdots, h_{19}\}$.

\begin{figure}[h]
\begin{center}
\includegraphics[width=4in, height=3in]{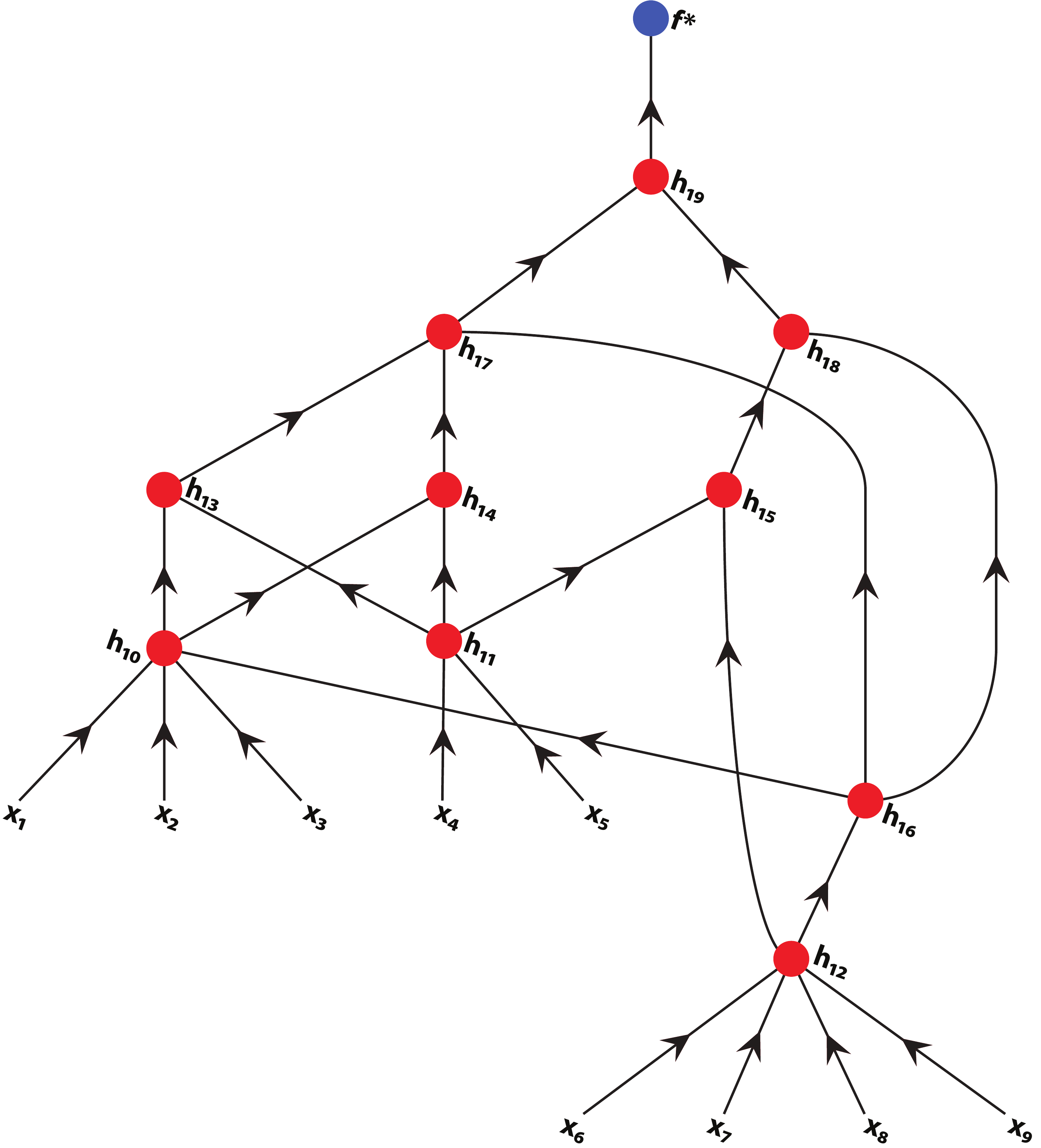} 
\end{center}

\caption{This figure from \cite{dingxuanpap} shows an example of a $\mathcal{G}$--function ($f^*$ given in (\ref{gfuncexample})). The vertices $V\cup \mathbf{S}$ of the DAG $\mathcal{G}$ are denoted by red dots. The black dots represent the inputs; the input to the various nodes as indicated by the in--edges of the red nodes. The blue dot indicates the output value of the $\mathcal{G}$--function, $f^*$ in this example.}
\label{graphpict}
\end{figure}

If $v\in \mathbf{S}$,  the (vector of) \textit{variables seen by $v$} are those which are input to $v$. 
For other $v\in V$, the \textit{variables seen by $v$} are defined recursively as the vector of variables obtained by concatenating the variables seen by each of the children of $v$ in order. 
In particular, there is a notation overload. The function $f_v$ is a function of $d(v)$ variables input to the vertex $v$. It is also a function of the variables seen by $v$.
For example, in the DAG of Figure~\ref{graphpict}, $h_{11}$ sees the variables $(x_4, x_5)$, $h_{13}$ is a function of two variables, namely, the outputs of $h_{10}$ and $h_{11}$, but it is also a function of the variables $(x_1,\cdots,x_5)$ which are seen by $h_{13}$.
We will explain what meaning is intended if we find it warranted.

In the remainder of this paper, we will assume $\mathcal{G}$ to be a fixed DAG.
%

\section{Good propagation of errors}\label{propsect}
The following Theorem~\ref{goodproptheo} is the main technical tool that allows us to reduce the problem of approximation by deep networks  to a series of approximations by shallow networks.
In this theorem, for integer $d\ge 1$, let $\rho_d$ be a metric on $\RR^d$. 
\begin{theorem}
\label{goodproptheo}
 Let $\{f_v\}$ be a $\mathcal{G}$-function satisfying the following Lipschitz condition: there exists a constant $L>0$ such that for $(x_1,\cdots,x_{d_v}), (y_1,\cdots,y_{d_v})\in\RR^{d(v)}$,
\begin{equation}
\label{liptschitz}
|f_v(x_1,\cdots,x_{d_v})-f_v(y_1,\cdots,y_{d_v})|\le L\rho_{d(v)}((x_1,\cdots,x_{d_v}), (y_1,\cdots,y_{d_v})).
\end{equation}
Let $\{g_v\}$ be a $\mathcal{G}$-function.
Let $w\in V$,  $\{u_1,\cdots,u_s\}\subset V$ be the children of $w$, and $\x_{u_1},\cdots, \x_{u_s}$ be the variables seen by $u_1,\cdots, u_s$ respectively.
Then
\begin{eqnarray}
\label{good_prop_err}
\lefteqn{|f_v(\x_{u_1},\cdots, \x_{u_s})-g_v(\x_{u_1},\cdots, \x_{u_s})|}\nonumber\\
&\!\!=\!\!&|f_v(f_{u_1}(\x_{u_1}),\cdots,f_{u_s}(\x_{u_s}))-g_v(g_{u_1}(\x_{u_1}),\cdots,g_{u_s}(\x_{u_s}))|\nonumber\\
&\!\!\le\!\!& \sup_{\y\in\RR^{d(v)}}|f_v(\y)-g_v(\y)| + L\rho_{d(v)}((f_{u_1}(\x_{u_1}),\cdots,f_{u_s}(\x_{u_s})),(g_{u_1}(\x_{u_1}),\cdots,g_{u_s}(\x_{u_s}) ).
\end{eqnarray}

\end{theorem}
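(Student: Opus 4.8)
The plan is to deduce \eref{good_prop_err} from a single application of the triangle inequality, after first clearing up the notation overload in the statement. Since $u_1,\cdots,u_s$ are exactly the children of $v$, we have $d(v)=s$, so the Lipschitz hypothesis \eref{liptschitz} is an assertion about $f_v$ viewed as a function on $\RR^{s}=\RR^{d(v)}$. Moreover, by the recursive definition of the variables seen by a node, $f_v$ as a function of the variables seen by $v$ is, by definition, the map $(\x_{u_1},\cdots,\x_{u_s})\mapsto f_v(f_{u_1}(\x_{u_1}),\cdots,f_{u_s}(\x_{u_s}))$, and likewise for $g_v$; this is precisely the first equality in \eref{good_prop_err}, so it requires no argument.

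Writing $\bs{a}=(f_{u_1}(\x_{u_1}),\cdots,f_{u_s}(\x_{u_s}))$ and $\bs{b}=(g_{u_1}(\x_{u_1}),\cdots,g_{u_s}(\x_{u_s}))$, both of which lie in $\RR^{d(v)}$, I would insert the hybrid term $f_v(\bs{b})$ and estimate
$$
|f_v(\bs{a})-g_v(\bs{b})|\le |f_v(\bs{a})-f_v(\bs{b})|+|f_v(\bs{b})-g_v(\bs{b})|.
$$
The second summand is at most $\sup_{\y\in\RR^{d(v)}}|f_v(\y)-g_v(\y)|$ because $\bs{b}\in\RR^{d(v)}$. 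The first summand is at most $L\rho_{d(v)}(\bs{a},\bs{b})$ by the Lipschitz condition \eref{liptschitz} applied to $f_v$ on $\RR^{d(v)}$ (note that no regularity of $\{g_v\}$ is needed). Summing the two bounds yields exactly the right-hand side of \eref{good_prop_err}.

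There is no genuine obstacle here; the only thing to watch is the notation overload, i.e., keeping separate the two roles of the symbol $f_v$ --- a function of the $d(v)$ direct inputs to $v$, versus a function of the possibly much longer vector of variables seen by $v$ --- and making sure that the Lipschitz constant $L$ and the metric $\rho_{d(v)}$ entering the first summand are those attached to $v$ itself rather than to its children. The estimate is otherwise immediate; its usefulness comes from iterating it along the DAG, as the sequel will do in order to lift approximation bounds from shallow to deep networks.
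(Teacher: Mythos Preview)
Your proposal is correct and follows exactly the same route as the paper: insert the hybrid term $f_v(\bs{b})$, apply the triangle inequality, bound one summand by the supremum and the other by the Lipschitz condition \eref{liptschitz}. The only cosmetic difference is the order in which the two summands are written.
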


\begin{proof}\ 
By triangle inequality followed by (\ref{liptschitz}), we get
\begin{eqnarray*}
\lefteqn{|f_v(f_{u_1}(\x_{u_1}),\cdots,f_{u_s}(\x_{u_s}))-g_v(g_{u_1}(\x_{u_1}),\cdots,g_{u_s}(\x_{u_s}))|}\\
&\le& |f_v(g_{u_1}(\x_{u_1}),\cdots,g_{u_s}(\x_{u_s}))-g_v(g_{u_1}(\x_{u_1}),\cdots,g_{u_s}(\x_{u_s}))|\\
&&\qquad + |f_v(f_{u_1}(\x_{u_1}),\cdots,f_{u_s}(\x_{u_s}))-f_v(g_{u_1}(\x_{u_1}),\cdots,g_{u_s}(\x_{u_s}))|\\
&\le& \sup_{\y\in\RR^{d(v)}}|f_v(\y)-g_v(\y)| + L\rho_{d(v)}((f_{u_1}(\x_{u_1}),\cdots,f_{u_s}(\x_{u_s})),(g_{u_1}(\x_{u_1}),\cdots,g_{u_s}(\x_{u_s}) )).
\end{eqnarray*}
\end{proof}

We illustrate Theorem~\ref{goodproptheo} using the example of approximation by spherical polynomials as in Section~\ref{atsect}. 
We note first  that the transformation
\begin{equation}
\label{sphere_rr_transform}
(x_1,\cdots,x_d)\mapsto \left(\frac{x_1}{\sqrt{|\x|^2+1}}, \cdots, \frac{x_d}{\sqrt{|\x|^2+1}}, \frac{1}{\sqrt{|\x|^2+1}}\right)
\end{equation}
is a one-to-one correspondence between $\RR^d$ and the open upper hemisphere $\SS^d_+$. 
For a function $f:\RR^d\to\RR$ vanishing at infinity, one can therefore associate in a one-to-one manner an even function on $\SS^d$ which shares all the smoothness properties of $f$. 
In the notation of Theorem~\ref{goodproptheo}, if we assume that all the $\mathcal{G}$-functions involved are continuous, the points such as $(f_{u_1}(\x_{u_1}),\cdots,f_{u_s}(\x_{u_s}))$ may thus be thought of as points on a compact subset of $\SS^s_+$.
Therefore, with some simple modifications, we may assume that the inputs to all the constituent functions are from the appropriate spheres.
Moreover, restricted to compact subsets of $\RR^d$, the usual Euclidean metric on $\RR^d$ is equivalent to the  metric $\rho_d$ on $\RR^d$  induced by the geodesic distance $\varrho_d$ on $\SS^d$. Therefore, we may write (\ref{good_prop_err}) in the form
\begin{equation}
\label{sph_good_prop_err}
|f_v(\x_{u_1},\cdots, \x_{u_s})-g_v(\x_{u_1},\cdots, \x_{u_s})|\le \|f_v-g_v\|_{\SS^{d(v)}}+L\sum_{k=1}^{d(v)}\|f_{u_k}-g_{u_k}\|_{\SS^{d(u_k)}}.
\end{equation}

Motivated by Theorem~\ref{goodproptheo}, we define the following notion. 
Let $W_d$ be a class of functions of $d$ variables with norm (or semi-norm) $\|\cdot\|_{W_d}$. 
The class $\mathcal{G}W$ consists of all $\mathcal{G}$-functions $\{f_v\}$ such that each $f_v\in W_{d(v)}$. 
We define
\begin{equation}
\label{gfunctnorm}
\|\{f_v\}\|_{\mathcal{G}W}=\sum_{v\in V}\|f_v\|_{W_{d(v)}};
\end{equation}
i.e., we use the tensor product norm on $\prod_{v\in V}W_{d(v)}$. 
For example, $\mathcal{G}\Pi_n$ is the class of all $\mathcal{G}$-functions of the form $\{P_v \in\Pi_n^{d(v)}\}$, 
$$
\|\{f_v\}\|_{\mathcal{G}W_\gamma}=\sum_{v\in V}\|f_v\|_{W_{d(v);\gamma}}, \quad E_n(\mathcal{G},\{f_v\})=\sum_{v\in V}E_{d(v);n}(f_v).
$$
We note that the fact that $E_n(\mathcal{G},\{f_v\})=\O(n^{-\gamma})$ is equivalent to the fact that $E_{d(v);n}(f_v)=\O(n^{-\gamma})$ for each $v\in V$.
Together with (\ref{equivtheorem}), Theorem~\ref{goodproptheo} leads to the following
\begin{theorem}
\label{deep_sph_poly_theo}
Let $\{f_v\}$ be a $\mathcal{G}$-function such that (\ref{liptschitz}) is satisfied with $\rho_d$ induced by the geodesic metric on $\SS^d$. 
Then there exist positive constants $c_3, c_4$ independent of the functions $\{f_v\}$ or $n$ such that
\begin{equation}
\label{deep_sph_poly_est}
c_3\|\{f_v\}\|_{\mathcal{G}W_\gamma}\le \sum_{v\in V}\|f_v\|_{\SS^{d(v)}}+\sup_{n\ge 1}n^{\gamma} E_n(\mathcal{G},\{f_v\})\le c_4\|\{f_v\}\|_{\mathcal{G}W_\gamma}.
\end{equation}
\end{theorem}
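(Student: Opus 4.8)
The plan is to derive (\ref{deep_sph_poly_est}) from the one–function equivalence (\ref{equivtheorem}) applied to each constituent function $f_v$ separately, and then to add up the resulting estimates over $v\in V$, reading off the definitions (\ref{gfunctnorm}) of $\|\{f_v\}\|_{\mathcal{G}W_\gamma}$ and of $E_n(\mathcal{G},\{f_v\})$. The Lipschitz hypothesis (\ref{liptschitz}), with $\rho_{d(v)}$ taken to be the metric induced by the geodesic distance, enters only to guarantee that each $f_v$ is continuous, hence — after the reductions to the sphere described following Theorem~\ref{goodproptheo} (the correspondence (\ref{sphere_rr_transform}) together with the confinement of the relevant inputs to compact sets) — may be viewed as a function in $C(\SS^{d(v)})$; this is what makes $\|f_v\|_{\SS^{d(v)}}$, $E_{d(v);n}(f_v)$ and $\|f_v\|_{W_{d(v);\gamma}}$ meaningful. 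Beyond that, (\ref{liptschitz}) is not used in this particular estimate.

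First I would fix $v\in V$, put $q=d(v)$, and apply (\ref{equivtheorem}) to obtain constants depending only on $d(v),\gamma,r$ for which
\[
c_1(v)\|f_v\|_{W_{d(v);\gamma}}\le \|f_v\|_{\SS^{d(v)}}+\sup_{n\ge 1}n^{\gamma} E_{d(v);n}(f_v)\le c_2(v)\|f_v\|_{W_{d(v);\gamma}}.
\]
Since $\mathcal{G}$ is a fixed finite DAG, I may replace the finitely many constants by $c_1=\min_{v\in V}c_1(v)>0$ and $c_2=\max_{v\in V}c_2(v)$, which depend on $\mathcal{G},\gamma,r$ only. Summing the displayed chain over $v\in V$ and invoking (\ref{gfunctnorm}),
\[
c_1\|\{f_v\}\|_{\mathcal{G}W_\gamma}\le \sum_{v\in V}\|f_v\|_{\SS^{d(v)}}+\sum_{v\in V}\sup_{n\ge 1}n^{\gamma} E_{d(v);n}(f_v)\le c_2\|\{f_v\}\|_{\mathcal{G}W_\gamma},
\]
so the whole task reduces to comparing $\sum_{v\in V}\sup_n n^{\gamma}E_{d(v);n}(f_v)$ with $\sup_n n^{\gamma}E_n(\mathcal{G},\{f_v\})=\sup_n n^{\gamma}\sum_{v\in V}E_{d(v);n}(f_v)$.

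The direct (right-hand) inequality of (\ref{deep_sph_poly_est}) is the easy direction: for every $n$, $n^{\gamma}\sum_v E_{d(v);n}(f_v)\le\sum_v\sup_m m^{\gamma}E_{d(v);m}(f_v)$, whence the left side of (\ref{deep_sph_poly_est}) is at most $\sum_{v\in V}\big(\|f_v\|_{\SS^{d(v)}}+\sup_n n^{\gamma}E_{d(v);n}(f_v)\big)\le c_2\|\{f_v\}\|_{\mathcal{G}W_\gamma}$ by the summed chain, so $c_4=c_2$ works. For the converse (left-hand) inequality I would use nonnegativity of the degrees of approximation: for every fixed $n$ and every $v$, $n^{\gamma}E_{d(v);n}(f_v)\le n^{\gamma}\sum_{v'\in V}E_{d(v');n}(f_{v'})\le\sup_m m^{\gamma}E_m(\mathcal{G},\{f_v\})$; taking $\sup_n$ on the left gives $\sup_n n^{\gamma}E_{d(v);n}(f_v)\le\sup_n n^{\gamma}E_n(\mathcal{G},\{f_v\})$ for each $v$, and summing over the $|V|$ nodes gives $\sum_{v\in V}\sup_n n^{\gamma}E_{d(v);n}(f_v)\le |V|\sup_n n^{\gamma}E_n(\mathcal{G},\{f_v\})$. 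Substituting this into the summed chain produces $c_1\|\{f_v\}\|_{\mathcal{G}W_\gamma}\le |V|\big(\sum_{v\in V}\|f_v\|_{\SS^{d(v)}}+\sup_n n^{\gamma}E_n(\mathcal{G},\{f_v\})\big)$, i.e. $c_3=c_1/|V|$ works.

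I do not expect a genuine obstacle here: the argument is a bookkeeping exercise built on (\ref{equivtheorem}). The one step worth a little care is the interchange of $\sup_n$ and $\sum_{v\in V}$ in the converse direction, which costs only the factor $|V|$ — harmless precisely because $\mathcal{G}$ is fixed once and for all — and which relies on nothing more than $E_{d(v);n}(f_v)\ge 0$ (monotonicity of $E_{d(v);n}(f_v)$ in $n$ is not even needed). Finally one checks, as already arranged above, that the constants produced depend on $\{f_v\}$ and $n$ only through the fixed data $\mathcal{G},\gamma,r$.
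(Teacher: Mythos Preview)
Your proof is correct. The argument is exactly the right one: apply the single-function equivalence (\ref{equivtheorem}) to each constituent $f_v$, sum over $v\in V$, and absorb the harmless factor $|V|$ coming from the interchange of $\sup_n$ and $\sum_v$ in the converse direction. All the bookkeeping is in order.

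One point of comparison with the paper is worth making. The paper introduces Theorem~\ref{deep_sph_poly_theo} with the phrase ``Together with (\ref{equivtheorem}), Theorem~\ref{goodproptheo} leads to the following,'' which suggests that the good-propagation-of-errors estimate plays a role in the proof. You correctly observe that it does not: since both $E_n(\mathcal{G},\{f_v\})$ and $\|\{f_v\}\|_{\mathcal{G}W_\gamma}$ are \emph{defined} as sums over $V$ of the corresponding one-node quantities, the equivalence (\ref{deep_sph_poly_est}) is a purely termwise consequence of (\ref{equivtheorem}), and Theorem~\ref{goodproptheo} is not invoked anywhere. The role of Theorem~\ref{goodproptheo} in this circle of ideas is motivational rather than technical --- it is what justifies defining $E_n(\mathcal{G},\{f_v\})$ as a sum of constituent errors in the first place, since (\ref{sph_good_prop_err}) shows that this sum controls the uniform error of the composed approximant to the composed target. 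Your write-up makes this distinction cleanly, which is a virtue.
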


We end this section by pointing out another important feature of Theorem~\ref{goodproptheo}.
It is customary in machine learning to measure the generalization error between a function and its approximation using an appropriate $L^2$ norm. 
In (\ref{good_prop_err}), the argument of $f_v$ is different (and in particular, differently distributed) from that of $g_v$. 
Thus, there is no natural measure with respect to which one can take the $L^2$ norm while preserving the advantages of compositionality.
Therefore, in the theory of function approximation by deep networks, one has to use the uniform norm.
In turn, this means that the usual bias-variance split does not work anymore, and one has to develop an entirely new paradigm.

\section{Approximation by ReLU networks}\label{relusect}
A ReLU network has the form $\x\mapsto\sum_{k=1}^N a_k(\x\cdot \y_k+b_k)_+$. Since $|t|=t_++(-t)_+$, $t_+=(|t|+t)/2$, we find it convenient to study instead networks of the form $\x\mapsto\sum_{k=1}^N a_k|\x\cdot \y_k+b_k|$. 
Writing $\w_k=(|\y_k|^2+b_k^2)^{-1/2}(\y_k,b)$ and recalling the transformation between $\RR^q$ and $\SS^q$, the problem of approximation of functions on $\RR^q$ by networks of this form is equivalent to that of approximation of functions on $\SS^q$ by zonal function networks of the form
$\x\mapsto\sum_{k=1}^N a_k|\x\cdot \w_k|$. 

Next, we define a smoothness class for approximation by such networks  \cite{sphrelu, dingxuanpap}.
In this section, we denote the dimension of the space of the restrictions to the sphere of all homogeneous harmonic polynomials of degree $\ell$ by $d_\ell^q$, $\ell=0,1,\cdots$, and the set of orthonormalized spherical harmonics on $\SS^q$ by $\{Y_{\ell,k}\}_{k=1}^{d_\ell^q}$. 
we recall the addition formula
\be\label{additionformula}
\sum_{k=1}^{d_\ell}Y_{\ell,k}(\u)\overline{Y_{\ell, k}(\v)}=\omega_{q-1}^{-1}p_\ell(1)p_\ell(\u\cdot\v),
\ee
where $p_\ell$ is the degree $\ell$ ultraspherical polynomial with positive leading coefficient, with the set $\{p_\ell\}$ satisfying
\be\label{orthonormal}
\int_{-1}^1 p_\ell(t)p_j(t)(1-t^2)^{q/2-1}dt =\delta_{j,\ell}, \qquad j, \ell =0, 1,\cdots.
\ee
The function $t\to |t|$ can be expressed in an expansion
\be\label{absseries}
|t|\sim p_0-\sum_{\ell=1}^\infty \frac{\ell-1}{\ell(2\ell-1)(\ell+q/2)}p_{2\ell}(0)p_{2\ell}(t), \qquad t\in [-1,1],
\ee
with the series converging on compact subsets of $(-1,1)$. 

If $f\in C(\SS^q)$, then we define
\be\label{fourcoeff}
\hat{f}(\ell, k)=\int_{\SS^q} f(\u)Y_{\ell,k}(\u)d\mu^*(\u).
\ee
We note that if $f$ is an even function, then  $\hat{f}(2\ell+1, k)=0$ for $\ell=0,1,\cdots$. 
In this context, the place of the operator $(I+\Delta)^{1/2}$ is taken by the operator $\mathcal{D}_{q;|\cdot|}$ defined formally by
\be\label{formderdef}
\widehat{\mathcal{D}_{q;|\cdot|} f}(2\ell,k)=\left\{\begin{array}{ll}
\hat{F}(0,0), & \mbox{if  $\ell=0$},\\[1ex]
\displaystyle -\frac{\ell(2\ell-1)(\ell+q/2)p_{2\ell}(1)}{\omega_{q-1}(\ell-1)p_{2\ell}(0)}\hat{F}(2\ell, k), &\mbox{ if $\ell=1,2,\cdots$,}
\end{array}\right.
\ee
and $\widehat{\mathcal{D}_{q;|\cdot|} f}(2\ell+1,k)=0$ otherwise.  
The space of all $f\in C(\SS^q)$ for which $\mathcal{D}_{q;|\cdot|}f \in C(\SS^q)$ is denoted by $Y_q$. 
We set
$$
\|f\|_{Y_q}=\|f\|_{C(\SS^q)}+\|\mathcal{D}_{q;|\cdot|}f \|_{\SS^q}.
$$
It is proved in \cite{sphrelu} that if $f\in Y_q$, then there exists a network of the form 
\be\label{sphrelurep}
G(\x)=\sum_{k=1}^N a_k|\x\cdot\w_k|
\ee
 such that
\be\label{sphreluapprox}
\|f-G\|_{\SS^q}\le \frac{c_4}{N^{2/q}}\|f\|_{Y_q}.
\ee
The class of all networks of the form $G$ is denoted $R_{q;N}$.
Our result in \cite{sphrelu} is in fact a constructive result. 
Thus, we work with data of the form $\{(\x_j,f(\x_j))\}_{j=1}^M$, $\x_j\in \SS^q$. 
If the points $\{\x_j\}$ are sufficiently dense on $\SS^q$, then we have shown that a network $G$ of the form \eref{sphrelurep} can be constructed with $N\sim M$, the coefficients $a_k$ can be chosen to be linear combinations of $\{f(\x_j)$'s with weights independent of $f$, and the points $\w_k$ can be chosen independently of the data. 
Thus, there is no training involved in the classical sense.

Theorem~\ref{goodproptheo} allows to ``lift'' this upper bound to the following corresponding bound for deep ReLU networks.
\begin{theorem}
\label{deep_relu_theo}
Let $\{f_v\}$ be a $\mathcal{G}$-function such that each $f_v$ satisfies (\ref{liptschitz})  with $\rho_{d(v)}$ induced by the geodesic metric on $\SS^{d(v)}$. In addition, let each $f_v\in Y_{d(v)}$. Let $d_\mathcal{G}=\max_{v\in V}d(v)$. Then there exists a deep network in $\mathcal{G}R_N$; i.e., a $\mathcal{G}$-function $\{g_v\}$ such that every $g_v\in R_{d(v);N}$ such that
\be\label{deep_sphrelu_approx}
\sum_{v\in V}\|f_v-g_v\|_{\SS^{d(v)}}\le \frac{c_5}{N^{2/d_\mathcal{G}}}\|\{f_v\}\|_{\mathcal{G}Y}.
\ee
\end{theorem}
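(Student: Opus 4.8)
The plan is to lift the shallow estimate (\ref{sphreluapprox}) to the DAG channel by channel. For each non-source node $v\in V$, the hypothesis gives $f_v\in Y_{d(v)}$ together with the Lipschitz condition (\ref{liptschitz}) for $\rho_{d(v)}$ induced by the geodesic metric on $\SS^{d(v)}$. After the change of variables (\ref{sphere_rr_transform}) --- and using, exactly as in the discussion following Theorem~\ref{goodproptheo}, that on a compact set the Euclidean metric on $\RR^{d(v)}$ is equivalent to the geodesic metric on $\SS^{d(v)}$, and that the Lipschitz hypothesis forces the arguments supplied to $f_v$ by its children to remain in a fixed compact subset of $\SS^{d(v)}_+$ --- the single-function result of \cite{sphrelu} applies to $f_v$. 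Hence, for every integer $N\ge 1$ there is a network $g_v\in R_{d(v);N}$, of the form (\ref{sphrelurep}), such that
\[
\|f_v-g_v\|_{\SS^{d(v)}}\le \frac{c_4}{N^{2/d(v)}}\,\|f_v\|_{Y_{d(v)}},
\]
and (since the construction in \cite{sphrelu} is constructive) the coefficients can even be taken linear in sampled data and the weights $\w_{v,k}$ data-independent.

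The only place where the compositional structure enters the estimate itself is the uniformization of the exponent. Since $d(v)\le d_\mathcal{G}$ for every $v\in V$ and $N\ge 1$, we have $N^{2/d(v)}\ge N^{2/d_\mathcal{G}}$, so the bound above weakens to $\|f_v-g_v\|_{\SS^{d(v)}}\le c_4 N^{-2/d_\mathcal{G}}\|f_v\|_{Y_{d(v)}}$ for \emph{every} node with one common $N$. Letting $\{g_v\}$ be the $\mathcal{G}$-function with these constituents, we have $\{g_v\}\in\mathcal{G}R_N$ by construction, and summing over $v\in V$ gives
\[
\sum_{v\in V}\|f_v-g_v\|_{\SS^{d(v)}}\le \frac{c_4}{N^{2/d_\mathcal{G}}}\sum_{v\in V}\|f_v\|_{Y_{d(v)}}=\frac{c_4}{N^{2/d_\mathcal{G}}}\,\|\{f_v\}\|_{\mathcal{G}Y},
\]
which is (\ref{deep_sphrelu_approx}) with $c_5=c_4$; here $\|\{f_v\}\|_{\mathcal{G}Y}$ is the tensor-product norm (\ref{gfunctnorm}) for the choice $W_d=Y_d$.

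It remains to explain the role of Theorem~\ref{goodproptheo}, which does not appear in the displayed chain but is what makes the quantity in (\ref{deep_sphrelu_approx}) the relevant one: unrolling (\ref{sph_good_prop_err}) along the DAG shows that the error $|f_{v^*}-g_{v^*}|$ at the sink is bounded by a sum, over $v\in V$, of the channel errors $\|f_v-g_v\|_{\SS^{d(v)}}$ weighted by powers of the Lipschitz constant $L$ depending only on the depth of $\mathcal{G}$; combined with the display above, the deep-network output then approximates $f_{v^*}$ at rate $\O(N^{-2/d_\mathcal{G}})$ up to a constant depending only on $L$ and the combinatorics of $\mathcal{G}$. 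The argument involves no genuine obstacle; the only care needed is bookkeeping --- verifying that, after the change of variables, the two hypotheses of \cite{sphrelu} (membership in $Y_{d(v)}$ and the geodesic Lipschitz bound) are available at each node, and observing that using the same $N$ at all nodes costs nothing because shrinking $d(v)$ only improves the rate.
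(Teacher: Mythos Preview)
Your proposal is correct and matches the paper, which gives no explicit proof but simply states that Theorem~\ref{goodproptheo} lifts (\ref{sphreluapprox}) to (\ref{deep_sphrelu_approx}); you carry this out exactly as intended by applying the shallow bound at each node, uniformizing the exponent via $d(v)\le d_\mathcal{G}$, and summing. One small correction to your bookkeeping remarks: the shallow estimate (\ref{sphreluapprox}) from \cite{sphrelu} requires only $f_v\in Y_{d(v)}$, so the Lipschitz hypothesis (\ref{liptschitz}) is not among ``the two hypotheses of \cite{sphrelu}'' and plays no role in the per-node estimate---it enters solely through Theorem~\ref{goodproptheo}, precisely as you say in your final paragraph.
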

For example, if $\mathcal{G}$ is a binary tree with $1024$ leaves, then a shallow network as in (\ref{sphreluapprox}) with $N$ neurons yields a degree of approximation $O(N^{-1/(512)})$, while a deep network as in (\ref{deep_sphrelu_approx}) yields a degree of approximation $O(N^{-1})$; a substantial improvement.

The ``derivative'' $\mathcal{D}_{|\cdot|}$ is very unusual in that instead of being a local function, it is supported on equators perpendicular to the point in question.
This is illustrated by Figure~\ref{reluapproxfig} from \cite{sphrelu}.
\begin{figure}[h]
\begin{center}
\begin{minipage}{0.4\textwidth}
\begin{center}
\includegraphics[width=0.5\textwidth,keepaspectratio]{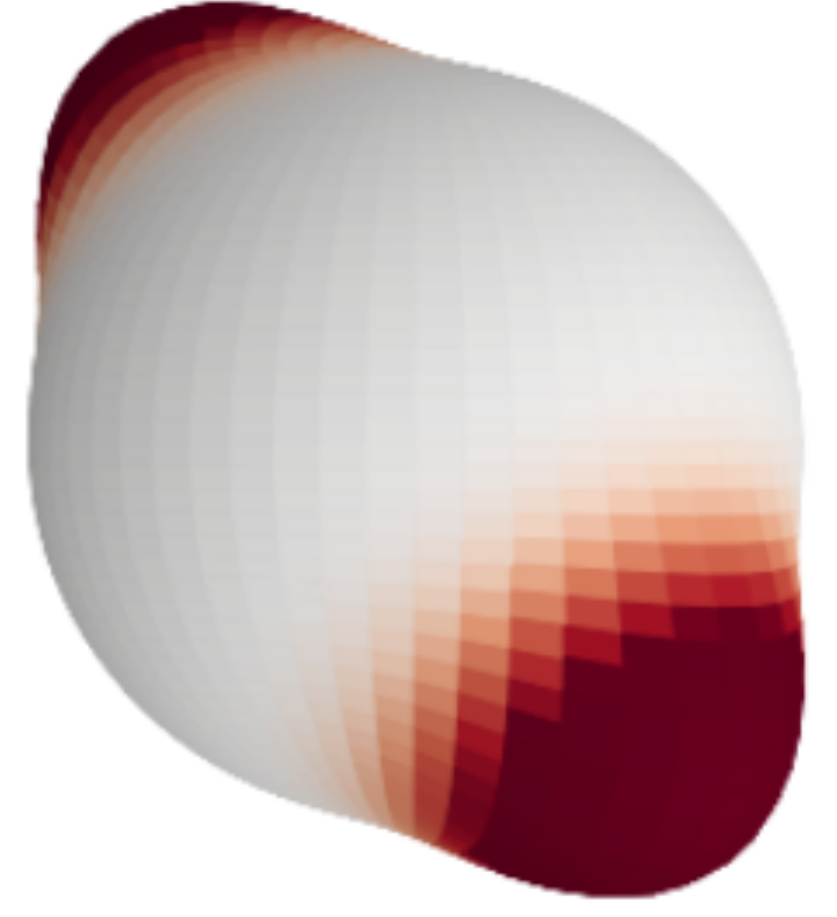} 
\end{center}
\end{minipage}
\begin{minipage}{0.4\textwidth}
\begin{center}
\includegraphics[width=0.5\textwidth,keepaspectratio]{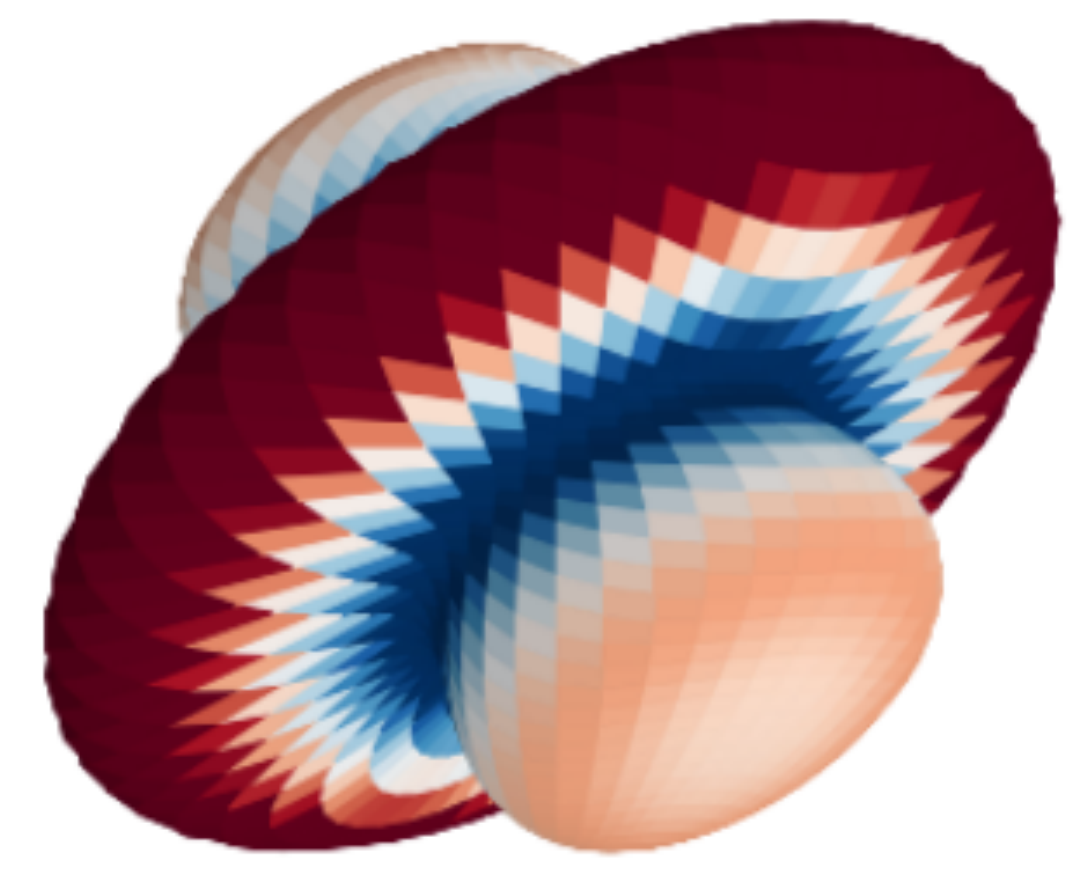} 
\end{center}
\end{minipage}
\end{center}
\caption{On the left, with $\x_0=(1,1,1)/\sqrt{3}$, the graph of $f(\x)=[(\x\cdot\x_0-0.1)_+]^8 + [(-\x\cdot\x_0-0.1)_+]^8$. On the right, the graph of $\mathcal{D}_{\phi_\gamma}(f)$. Courtesy: D. Batenkov.}
\label{reluapproxfig}
\end{figure}

Omitting the requirement that the mapping $f\mapsto (a_1,\cdots,a_N, \w_1,\cdots, \w_N)$ be continuous, we have proved in \cite{mhaskar2019dimension} that the estimate in \eref{sphreluapprox} can be improved to $\O(N^{(q+3)/(2q)})$. 
Of course, the bounds in \eref{deep_sphrelu_approx} are also improved accordingly. 
For example, if $q=1024$, and DAG structure is a full binary tree, then the improvement in the estimate for deep network is only (up to a logarithmic term) $\O(N^{-1.25})$, while the same for a shallow network is $\O(N^{-0.5015})$. 
With the requirement about the network being trained with samples of $f$ (i.e., a continuous parameter selection), the improvement is (up to a logarithmic term) $\O(N^{-1})$ for deep networks, over $\O(N^{-0.002})$ for shallow networks. 
Since a converse theorem does not stipulate continuous parameter selection, a converse theorem is not possible in this context.
However, we conjecture that a width theorem is true.

In contrast to the ReLU networks, if we consider the spherical convolution function 
\be\label{sph_conv_relu}
\phi(\x\cdot \y)=\int_{\SS^q}|\x\cdot\u||\u\cdot\y|d\mu^*(\u),
\ee
then a complete theory emerges by combining the results in \cite{eignet} with Theorem~\ref{goodproptheo}. 
An interesting feature of this theory is that the complexity of the network is not measured in terms of the number of neurons but the minimal separation among the neurons.
If $\C\subset \SS^q$ is a finite subset, we define the minimal separation $\eta(\C)$ and mesh norm $\delta(\C)$ of $\C$ by
\be\label{minsepdef}
\eta(\C)=\min_{\x,\y\in\C, \x\not=\y}\varrho_q(\x,\y),\quad \delta(\C)=\max_{\x\in\SS^q}\min_{\y\in \C}\varrho_q(\x,\y),
\ee
where $\varrho_q$ is the geodesic distance on $\SS^q$. 
By replacing $\C$ by a suitable subset, we may assume that
\be\label{uniformity}
\delta(\C)\le 2\eta(\C)\le 4\delta(\C).
\ee
For a finite subset $\C\subset \SS^q$, the set $\mathcal{N}(q;\C)$ comprises networks of the form
$\x\mapsto\sum_{\y\in\C}a_\y\phi(\x\cdot\y)$. 
We note that the number of neurons in a network in $\mathcal{N}(q;\C)$ is $O(\eta(\C)^{-q})$, but given $N$, it easy to construct $\C$ with $N$ elements for which $\eta(\C)^{-q}\gg N$. 

Omitting many nuances and using a different notation, \cite[Theorem~3.3]{eignet} (applied to the sphere) can be restated in the following form. 
\begin{theorem}
\label{shallow_eignet_theo}
Let $0<\gamma<3$ and $f\in W_{q;\gamma}$. 
For any set $\C$ satisfying (\ref{uniformity}), there exists $G\in \mathcal{N}(q;\C)$ such that
\be\label{shallow_direct}
\|f-G\|_{\SS^q}\le c_6\eta(\C)^\gamma \|f\|_{W_{q;\gamma}}.
\ee
Conversely, let $\C_m$ be a nested sequence of sets satisfying (\ref{uniformity}), and for each integer $m\ge 1$, $\eta(\C_m)\ge 1/m$. If $f\in C(\SS^q)$ and
$\mathsf{dist}(C(\SS^q);f,\mathcal{N}(q;\C_m))=O(m^{-\gamma})$, then $f\in W_{q;\gamma}$.
\end{theorem}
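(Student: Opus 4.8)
\medskip
\noindent The statement is the specialization to $\SS^q$ of \cite[Theorem~3.3]{eignet}, so the plan is to check that the spectral data attached to the zonal kernel $\phi$ of \eref{sph_conv_relu} fit the hypotheses of the ``eignet'' machinery and then to transcribe its two halves. The first step is to record the Funk--Hecke data of $\phi$. Since $\phi$ is the zonal self-convolution of the profile $t\mapsto|t|$, its Funk--Hecke eigenvalues $\lambda_\ell(\phi)$ are the \emph{squares} of those of $|\cdot|$; hence, by \eref{absseries} and the classical asymptotics of ultraspherical polynomials, they are strictly positive for every even $\ell\neq 2$, vanish for $\ell=2$ and for every odd $\ell$, and decay at a fixed polynomial rate. (The finitely many harmonic bands on which $\lambda_\ell(\phi)=0$ --- the odd degrees and $\ell=2$ --- are disposed of separately; in the intended applications the constituent functions are even after the transformation \eref{sphere_rr_transform}, and the residual finite-dimensional gap is absorbed into the construction.) Two consequences matter. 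First, the positivity lets one define a pseudo-differential ``inverse-convolution'' operator $\mathcal D$ with symbol comparable to $1/\lambda_\ell(\phi)$, and build from $\phi$ localized kernels at every scale. Second --- and this is what fixes the cutoff $\gamma<3$ --- the kernel $\phi$ has smoothness exactly $3$: as a function of $t\in[-1,1]$ its profile is $C^\infty$ on $(-1,1)$ and behaves near $t=\pm1$ like $(1-t^{2})^{3/2}$ (being the double convolution of $t\mapsto|t|$), so its best uniform polynomial approximation on $[-1,1]$ is of exact order $n^{-3}$, and no kernel network can approximate anything appreciably smoother than $\phi$ itself.

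For the direct estimate \eref{shallow_direct}, fix $\C$ satisfying \eref{uniformity} and put $n\sim 1/\eta(\C)$. Using \eref{equivtheorem} I would first replace $f$ by a \emph{filtered} spherical polynomial $P$ of degree $\lesssim n$, so that $\|f-P\|_{\SS^q}\le c\,n^{-\gamma}\|f\|_{W_{q;\gamma}}$ and $\|P\|_{\SS^q}\le c\|f\|_{\SS^q}$. Next one writes $P$ (or a negligible perturbation of it) as the zonal convolution $\int_{\SS^q}\phi(\x\cdot\y)\,\Psi(\y)\,d\mu^*(\y)$, where $\Psi$ is a spherical polynomial of degree $\lesssim n$ obtained by applying a localized (bandlimited) version of $\mathcal D$ to $P$; its uniform norm is then controlled by $\|f\|_{\SS^q}$ up to a fixed power of $n$. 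Because $\C$ is quasi-uniform in the sense of \eref{uniformity}, there is a positive-weight Marcinkiewicz--Zygmund quadrature formula supported on $\C$, with weights $w_\y\asymp\eta(\C)^{q}$, that integrates $\Pi^q_{\kappa n}$ exactly for a suitable fixed $\kappa>1$. Discretizing the convolution by this rule yields $G=\sum_{\y\in\C}w_\y\Psi(\y)\phi(\x\cdot\y)\in\mathcal N(q;\C)$, and $\|P-G\|_{\SS^q}$ is exactly the quadrature error for the integrand $\y\mapsto\phi(\x\cdot\y)\Psi(\y)$. Exactness on $\Pi^q_{\kappa n}$ makes every harmonic band of this integrand up to degree $\sim\kappa n$ integrate exactly, so only the high-frequency tail of $\phi$ (multiplied by $\Psi$) contributes; estimating that tail against the smoothness of $\phi$ and the size of $\Psi$, and using $n\sim 1/\eta(\C)$, one shows the error is $\le c\,\eta(\C)^{\gamma}\|f\|_{W_{q;\gamma}}$ precisely in the range $0<\gamma<3$. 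Adding the two errors gives \eref{shallow_direct}.

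For the converse I would go through the first inequality of \eref{equivtheorem}: it suffices to prove $E_{q;n}(f)=O(n^{-\gamma})$. I would combine three ingredients in the classical manner. (i) \textbf{Jackson}: the direct estimate above. (ii) \textbf{Quasi-linearity}: since the $\C_m$ are nested, $\mathcal N(q;\C_m)-\mathcal N(q;\C_{m'})\subseteq\mathcal N(q;\C_{\max(m,m')})$. (iii) \textbf{Bernstein}: for some $r$ with $\gamma<r<3$ and a pseudo-differential operator $\mathcal L$ of order $r$, the inequality $\|\mathcal L G\|_{\SS^q}\le c\,\eta(\C_m)^{-r}\|G\|_{\SS^q}$ for $G\in\mathcal N(q;\C_m)$; here \eref{uniformity} is used once more, because $\phi$ has smoothness $3>r$, so $\mathcal L\phi$ is again a bounded zonal kernel, whence $\mathcal L G$ is a kernel network on $\C_m$ whose sup norm is bounded by the localization/quadrature estimates for quasi-uniform sets together with a stability bound for the coefficients of $\phi$-networks in terms of the network's sup norm. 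Feeding (i)--(iii) into the standard telescoping argument --- near-best approximants $g_k\in\mathcal N(q;\C_{m_k})$ at scales $\eta(\C_{m_k})\asymp 2^{-k}$, the estimates for $\|g_{k+1}-g_k\|_{\SS^q}$ and $\|\mathcal L(g_{k+1}-g_k)\|_{\SS^q}$, and summation --- shows that $\mathsf{dist}(C(\SS^q);f,\mathcal N(q;\C_m))=O(m^{-\gamma})$ forces $\sup_n n^{\gamma}E_{q;n}(f)<\infty$, i.e.\ $f\in W_{q;\gamma}$.

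The main obstacle is the quadrature/discretization estimate in the direct part: one must control precisely how the limited degree of polynomial exactness ($\sim 1/\eta(\C)$) of any $\C$-supported rule interacts with the non-polynomial kernel $\phi$ and with the unavoidable polynomial growth of $\|\Psi\|_{\SS^q}$, and it is this interplay, together with the order-$3$ smoothness of $\phi$, that confines the admissible exponents to $0<\gamma<3$. The companion Bernstein inequality needed for the converse is of the same nature, and these two estimates are the technical heart of the matter --- precisely the part carried out in \cite{eignet} (``omitting many nuances'', in the phrasing of the present paper).
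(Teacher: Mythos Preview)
The paper does not give a proof of this theorem at all: it is stated as a restatement of \cite[Theorem~3.3]{eignet}, with the explicit disclaimer ``omitting many nuances and using a different notation.'' Your proposal correctly identifies this and outlines the eignet argument --- the Funk--Hecke spectral analysis of the self-convolution kernel $\phi$ (including the vanishing at $\ell=2$ and odd $\ell$, and the order-$3$ smoothness that forces $\gamma<3$), the direct estimate via filtered polynomial approximation followed by Marcinkiewicz--Zygmund quadrature on the quasi-uniform set $\C$, and the converse via the Jackson/Bernstein/telescoping machinery. This is exactly the route the paper is pointing to, so your approach is aligned with the paper's (implicit) proof.

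One small caution: your claim that the profile of $\phi$ behaves like $(1-t^2)^{3/2}$ near $t=\pm 1$ is a heuristic rather than the actual mechanism in \cite{eignet}; there the smoothness threshold is read off directly from the decay rate of the Funk--Hecke eigenvalues $\lambda_\ell(\phi)$ (the squares of those of $|\cdot|$), which is what feeds into both the quadrature tail estimate and the Bernstein inequality. This does not affect the correctness of your outline, but in a full write-up you would want to phrase the ``smoothness exactly $3$'' statement spectrally rather than pointwise.
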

Using Theorem~\ref{goodproptheo}, this theorem can be lifted as before to the following theorem for deep networks.
\begin{theorem}
\label{deep_eignet_theo}
{\rm (a)} For each $v\in V$, let $\C_v\subset \SS^q$ be finite subsets satisfying (\ref{uniformity}). Let $\eta=\max\eta(\C_v)$. Let $0<\gamma<3$, and $\{f_v\}\in \mathcal{G}W_\gamma$. In addition, we assume that each  $f_v$ satisfies (\ref{liptschitz})  with $\rho_{d(v)}$ induced by the geodesic metric on $\SS^{d(v)}$. Then there exists a $\mathcal{G}$-function $\{G_v\}$ such that each $G_v\in \mathcal{N}(d(v);\C_v)$ and
\be\label{deep_eignet_direct}
\sum_{v\in V}\|f_v-G_v\|_{\SS^{d(v)}}\le c_7\eta^\gamma \sum_{v\in V}\|f_v\|_{d(v);\gamma}.
\ee
{\rm (b)} Conversely, for each $v\in V$, let $\C_{m,v}$ be a nested sequence of finite subsets of $\SS^{d(v)}$ satisfying (\ref{uniformity}) and $\eta(\C_{m,v})\ge 1/m$. If $\{f_v\}$ is a $\mathcal{G}$-function such that each $f_v\in C(\SS^{d(v)})$ and there exists a sequence of $\mathcal{G}$-functions $\{G_{m,v}\}$ such that each $G_{m,v}\in \mathcal{N}(d(v),\C_{m,v})$ and 
$$
\sum_{v\in V}\|f_v-G_{m,v}\|_{\SS^{d(v)}} =O(m^{-\gamma}),
$$
then each $f_v\in W_{d(v);\gamma}$.
\end{theorem}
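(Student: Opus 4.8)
The plan is to prove both parts as ``lifting'' arguments of the same flavour as Theorem~\ref{deep_sph_poly_theo}: one applies the shallow statement Theorem~\ref{shallow_eignet_theo} at each non-source node of $\mathcal{G}$ separately and then reassembles. The reassembly is harmless because any choice of a function for each $v\in V$, together with the (fixed) source functions, is again a $\mathcal{G}$-function, and because Theorem~\ref{goodproptheo} in the form (\ref{sph_good_prop_err}) guarantees that the node-wise error sum on the left of (\ref{deep_eignet_direct}) dominates the error of the fully composed function. (In fact the Lipschitz hypothesis (\ref{liptschitz}) plays no role in the displayed estimates themselves; it is retained precisely so that, through (\ref{sph_good_prop_err}), the node-wise bound becomes a bound on the composed $\mathcal{G}$-function.)

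For part (a), first I would fix $v\in V$. Since $0<\gamma<3$, $f_v\in W_{d(v);\gamma}$, and $\C_v$ satisfies (\ref{uniformity}), the direct half of Theorem~\ref{shallow_eignet_theo} applied on $\SS^{d(v)}$ produces $G_v\in\mathcal{N}(d(v);\C_v)$ with $\|f_v-G_v\|_{\SS^{d(v)}}\le c_6(d(v))\,\eta(\C_v)^\gamma\,\|f_v\|_{W_{d(v);\gamma}}$. Using $\eta(\C_v)\le\eta$, setting $c_7=\max_{v\in V}c_6(d(v))$ (finite because $V$ is finite), and summing over $v\in V$ gives (\ref{deep_eignet_direct}); the $\mathcal{G}$-function $\{G_v\}$ is obtained by leaving the source functions unchanged. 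The only point to watch here is the uniformization of the dimension-dependent constant over the finitely many node arities.

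For part (b), I would fix an arbitrary $v_0\in V$. Every term of $\sum_{v\in V}\|f_v-G_{m,v}\|_{\SS^{d(v)}}$ is nonnegative, so $\|f_{v_0}-G_{m,v_0}\|_{\SS^{d(v_0)}}=O(m^{-\gamma})$; and since $G_{m,v_0}\in\mathcal{N}(d(v_0);\C_{m,v_0})$ this forces $\mathsf{dist}(C(\SS^{d(v_0)});f_{v_0},\mathcal{N}(d(v_0);\C_{m,v_0}))=O(m^{-\gamma})$. The sequence $\{\C_{m,v_0}\}_m$ is nested, satisfies (\ref{uniformity}), and obeys $\eta(\C_{m,v_0})\ge 1/m$, and $f_{v_0}\in C(\SS^{d(v_0)})$; hence the converse half of Theorem~\ref{shallow_eignet_theo} applies and yields $f_{v_0}\in W_{d(v_0);\gamma}$. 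Since $v_0\in V$ was arbitrary, this proves (b).

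I do not anticipate a genuine obstacle: the mathematical substance lives entirely in Theorems~\ref{shallow_eignet_theo} and~\ref{goodproptheo}, and the present statement is their bookkeeping over the DAG. The two small points of hygiene are (i) collapsing the dimension-dependent constants of Theorem~\ref{shallow_eignet_theo} to a single constant over the finite node set, and (ii) observing that the converse direction, both here and in the shallow theorem, does not (and need not) impose continuous parameter selection, so the per-node extraction in (b) relies only on the existence of the sequence $\{G_{m,v_0}\}_m$, which it does. The restriction $0<\gamma<3$ is inherited verbatim from the shallow theorem.
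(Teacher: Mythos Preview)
Your proposal is correct and is precisely the ``lifting'' argument the paper has in mind: the paper does not spell out a proof of Theorem~\ref{deep_eignet_theo} but merely states that Theorem~\ref{shallow_eignet_theo} ``can be lifted as before'' via Theorem~\ref{goodproptheo}, and your node-by-node application of the direct and converse halves of Theorem~\ref{shallow_eignet_theo}, followed by summation (for~(a)) or term extraction (for~(b)), is exactly that. Your side remark that the Lipschitz hypothesis is inert for the displayed sum~(\ref{deep_eignet_direct}) itself, and matters only when one passes via (\ref{sph_good_prop_err}) to the composed function, is also accurate and worth keeping.
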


\section{Related works}\label{relatedsect}
There is a deluge of papers on the expressive power of deep networks and their superiority over shallow networks. 
We cite a few of these.
The papers \cite{MontufarBengio2014, serra2017bounding}  measure the expressive power by the number of linear pieces into which the network partitions the domain space. 
This measurement overlooks the fact that the optimal number of pieces ought to depend upon the function being approximated.
It is shown in \cite{sharir2017expressive} that deep networks are better when the complexity is measured in terms of the rank of certain tensors.
It is not clear how this criterion relates to the problem of function approximation.
The papers \cite{telgarsky2016benefits, eldan2016power} establish the existence of functions which cannot be approximated well by neural networks with a given graph structure. 
This anticipates the compositionality of the networks being represented by a DAG structure, but does not address the compositional nature of the target function itself.
The papers \cite{safran2017depth, chui1996limitations, chui_zhou} show that specific functions such as the characteristic functions of balls and  radial functions cannot be approximated well by shallow ReLU networks.
In \cite{multilayer}, it is shown that by using the function $t\mapsto (t_+)^2$ as the activation function, one can synthesize any spline or polynomial exactly with a network with sufficient depth. 
In particular, one can synthesize any given partition of the Euclidean space into linear regions arbitrarily closely.
In \cite{hanin2017universal} estimates on the degree of uniform approximation are given in terms of the modulus of continuity, where the number of neurons in each layer is fixed at $2q+1$, but the number of layers is inversely proportional to the modulus of continuity and fixed width.
The paper \cite{bach2014} obtains bounds on the degree of approximation of Lipschitz continuous functions by ReLU networks. 
The idea of transforming the problem from the Euclidean space to that on the sphere is used in this paper as well.
This paper also considers approximation by spherical convolutions as in (\ref{sph_conv_relu}). 
Our estimates are under different assumptions, and are better.
Lower bounds for universal approximation of Lipschitz functions by ReLU networks are given in \cite{yarotsky2017error, yarotsky2018optimal}, and for twice differentiable functions in \cite{SafranShamir2016}.
In particular, \cite{yarotsky2018optimal} gives a detailed analysis, showing the order of magnitude of the degree of approximation of Lipschitz continuous functions cannot be better than $N^{-2/q}$, where $N$ is the number of neurons. 
The bound (\ref{sphreluapprox}) clearly achieves this as an upper bound, but with a different class of functions.
We conjecture that the class of functions introduced in this paper is the best possible, in the sense that the estimate (\ref{sphreluapprox}) cannot be improved in terms of nonlinear widths.
However, a converse theorem is probably not true.
Finally, we note that explicit expressions for the kernels $\phi$ defined in (\ref{sph_conv_relu}) are easy to deduce from those given in \cite{cho2009kernel} where the function $t\mapsto \max(t,0)$ is used in place of $|\circ|$.

\section{Conclusions}\label{concludesect}
We have demonstrated several concepts in this paper.
First, we have shown that deep networks have a better approximation power than shallow networks because they are capable of reflecting any compositional structure in the target function, while shallow networks cannot.
Second, we have pointed out an important tool in this theory called good propagation of errors which enables us to lift theorems on approximation power of shallow networks to those of deep networks if all the constituent functions are Lipschitz continuous.
Third, we have argued that in order to use this tool, there is no natural measure at each step with respect to which the error can be measured in the $L^2$-norm as customary in machine learning.
In particular, the usual bias-variance split does not work anymore, and a new paradigm is necessary.
Fourth, we obtained converse theorems for approximation by certain kernels obtained from the ReLU functions which enable us to verify from the observed degree of approximation the prior smoothness condition which the target function must satisfy.

We note that the question of whether or not a given target function is compositional is meaningless; e.g.,
$$f(x)=(x+1)\cosh\left(\log\left(\frac{2+\sqrt{3-2x-x^2}}{x+1}\right)\right)\equiv 2, \qquad x\in [0,1].$$
However, the direct and converse theorems show that if we know in advance that the target function is not as smooth as the degree of approximation by the networks indicates, then the blessing of compositionality must be playing some role.


\end{document}